\newcommand{\E}{\mathds{E}}
\newcommand{\bx}{\bm{x}}
\newcommand{\Ccal}{\mathcal{C}}
\newcommand{\Dcal}{\mathcal{D}}
\newcommand{\Pcal}{\mathcal{P}}
\newcommand{\btheta}{\bm{\theta}}
\providecommand{\Pr}{}
\renewcommand{\Pr}{\mathbb{P}}
\newcommand{\wh}[1]{\widehat{#1}}
\def\blfootnote{\gdef\@thefnmark{}\@footnotetext}
\theoremstyle{plain}
\newtheorem{theorem}{Theorem}[section]
\newtheorem{lemma}[theorem]{Lemma}
\theoremstyle{definition}
\theoremstyle{remark}
\title{Clustering by Attention: Leveraging Prior Fitted Transformers for Data Partitioning}
\author{
  Ahmed~Shokry \\
  Applied Innovation Center\\
  Alexandria, Egypt \\
  \texttt{a.shokry@aic.gov.eg} \\
  \And
  Ayman~Khalafallah \\
  Applied Innovation Center \\
  Alexandria, Egypt \\
  \texttt{a.khalafallah@aic.gov.eg} \\
}
\begin{document}

\maketitle

\begin{abstract}
Clustering is a core task in machine learning with wide-ranging applications in data mining and pattern recognition. However, its unsupervised nature makes it inherently challenging. Many existing clustering algorithms suffer from critical limitations: they often require careful parameter tuning, exhibit high computational complexity, lack interpretability, or yield suboptimal accuracy, especially when applied to large-scale datasets.
In this paper, we introduce a novel clustering approach based on meta-learning. Our approach eliminates the need for parameter optimization while achieving accuracy that outperforms state-of-the-art clustering techniques. The proposed technique leverages a \textit{few} pre-clustered samples to guide the clustering process for the entire dataset in a single forward pass. Specifically, we employ a pre-trained Prior-Data Fitted Transformer Network (PFN) to perform clustering. The algorithm computes attention between the pre-clustered samples and the unclustered samples, allowing it to infer cluster assignments for the entire dataset based on the learned relation. We theoretically and empirically demonstrate that, given just a few pre-clustered examples, the model can generalize to accurately cluster the rest of the dataset.
Experiments on challenging benchmark datasets show that our approach can successfully cluster well-separated data without any pre-clustered samples, and significantly improves performance when a few clustered samples are provided.
We show that our approach is superior to the state-of-the-art techniques. These results highlight the effectiveness and scalability of our approach, positioning it as a promising alternative to existing clustering techniques.
\end{abstract}

\section{Introduction}
\label{intro}

Data clustering refers to the task of discovering inherent groupings or structures within high-dimensional datasets, based on a predefined similarity or distance metric. The effectiveness of clustering methods is highly sensitive to several factors, including the representation of the input data, the dimensionality of the feature space, and the level of noise present in the dataset~\citet{boja2011clusters}. These factors often lead to significant variations in clustering performance across different applications and domains.

Broadly, clustering algorithms are classified into two principal categories: hierarchical and partitional methods~\citet{kantardzic2011data, karaboga2011novel}. Hierarchical clustering constructs a nested tree of clusters using either a bottom-up (agglomerative) or top-down (divisive) approach. In agglomerative hierarchical clustering, each data point is initially treated as an individual cluster, and pairs of clusters are successively merged based on a linkage criterion. In contrast, divisive hierarchical clustering begins with the entire dataset as a single cluster and recursively partitions it into smaller clusters~\citet{jain1999data, rokach2010data}.

On the other hand, partitional clustering aims to partition the dataset into a set of mutually exclusive clusters, such that each data point belongs to exactly one cluster. Among partitional methods, prototype-based clustering is particularly prominent. In this approach, each cluster is represented by a prototype—typically the mean or median of its constituent data points—and the algorithm seeks to minimize an objective function that quantifies the overall distance between each data point and the prototype of its assigned cluster~\citet{mirkin1996mathematical}. A widely adopted example of this class is the K-means algorithm, which is favored for its simplicity and computational efficiency~\citet{macqueen1967some}.

Prior Fitted Networks (PFNs) are a recently developed class of models that leverage the expressive power of Transformer architectures~\citet{vaswani2017attention} to perform Bayesian inference in a highly efficient and scalable manner. The core idea behind PFNs is to approximate the posterior predictive distribution (PPD) using a neural network trained entirely on synthetic data sampled from a known prior distribution—without requiring real-world training data or iterative optimization during inference. This enables PFNs to generate predictions in a single forward pass, making them both fast and computationally efficient.
The concept of PFNs was first introduced by~\citet{muller2021transformers} for general-purpose Bayesian inference. The model is trained offline using synthetic datasets generated from a probabilistic prior and corresponding likelihood functions. The training objective is to approximate the conditional distribution of the target variable given a context (i.e., a set of input-output pairs) and a query point. By learning from a large space of tasks sampled from the prior, the PFN generalizes to new tasks drawn from the same or related priors, effectively performing amortized Bayesian inference.
Subsequent works have extended the PFN framework to various domains. In classification, ~\citet{hollmann2022tabpfn} and ~\citet{hollmann2025accurate} proposed TabPFN, which applies PFNs to tabular data for supervised classification tasks, demonstrating strong performance on few-shot learning problems. In the context of meta-learning and model evaluation, ~\citet{adriaensen2022efficient} employed PFNs to extrapolate learning curves, providing early predictions of a model’s future performance based on limited training progress. Furthermore, in time-series forecasting, ~\citet{khurana2023forecastpfn} introduced ForecastPFN, a PFN-based method that achieves competitive accuracy while maintaining the efficiency of a single forward-pass prediction.
The key advantage of PFNs lies in their ability to perform inference without fine-tuning or explicit model selection at test time. By training the network entirely on artificial data sampled from a prior, PFNs bypass the need for task-specific labeled data and hyperparameter tuning, while still providing well-calibrated uncertainty estimates through their approximation of the posterior predictive distribution. This makes PFNs a compelling framework for various applications requiring fast, reliable inference with minimal supervision.

In this paper, we explore the use of Prior Fitted Networks (PFNs) for data clustering, introducing an algorithm that combines the efficiency of Transformer-based inference with the flexibility of meta-learning. Unlike traditional clustering methods, which often rely on iterative optimization, careful parameter tuning, or assumptions about data distributions, our method performs clustering in a single forward pass of a pre-trained PFN Transformer. The key idea is to provide a few pre-clustered samples from the dataset as input tokens, alongside unclustered data points. The Transformer then calculates attention between the pre-clustered and unclustered samples, effectively propagating cluster information through the attention mechanism. As a result, the model predicts the cluster assignments for all unclustered points without requiring any retraining or fine-tuning. This approach differs significantly from previous PFN applications, which have focused on supervised tasks such as classification~\citet{hollmann2022tabpfn}, forecasting~\citet{khurana2023forecastpfn}, and general Bayesian inference~\citet{muller2021transformers}, none of which address unsupervised learning scenarios. Moreover, unlike existing clustering techniques such as K-means~\citet{macqueen1967some}, hierarchical clustering, or density-based methods, our algorithm does not require iterative refinement or hyperparameter selection and can generalize from minimal supervision. We provide both theoretical and empirical evidence that the model can accurately cluster an entire dataset using only a few pre-clustered examples, demonstrating strong performance across benchmark datasets. This work not only introduces a new direction for clustering via PFNs but also showcases the broader potential of amortized, attention-based inference for unsupervised learning tasks.

%
We evaluate the performance of the proposed algorithm on different challenging datasets. The results show that the algorithm can cluster easily separable datasets without seeing any clustered samples from the data. If a few clustered samples are available, the algorithm can use them efficiently to improve clustering accuracy. Results also show that the algorithm can achieve state-of-the-art accuracy with a running time that is comparable to classical clustering algorithms.

The remaining sections are organized as follows: Section~\ref{sec:methodology} explains the details of the clustering algorithm. Section~\ref{sec:results} presents the results. Section~\ref{sec:discussion} and \ref{sec:conclusion} discuss the proposed algorithm and its limitations and conclude the work, respectively.

\section{Clustering Algorithm}
\label{sec:methodology}
We start the section by explaining the use of PFN for clustering then we explain the mathematical details of the clustering algorithm.

\subsection{Prior Data Fitted Networks for Clustering}
\label{subsec:pfn}
\citet{muller2021transformers} introduced prior-data fitted networks, which are Transformers used for approximate Bayesian prediction in supervised learning. These networks incorporate a prior that establishes a set of hypotheses $\mathcal{H}$ governing the relationship between input data $x$ and output $y$. Each hypothesis $h\in\mathcal{H}$ corresponds to a specific data distribution, and we can generate training datasets $\Dcal_{train} := \{(x_i,y_i)\}_{i=1}^{i=N}$ by sampling from these hypotheses.

The PPD for a test sample $x_{test}$ represents the distribution of possible outcomes, denoted as $p(\cdot|x_{test},\Dcal_{train})$, which depends on the training dataset $\Dcal_{train}$. In order to obtain the PPD, it is necessary to perform integration over all hypotheses $\mathcal{H}$.

\begin{equation}
    p(y|x,\Dcal) \propto \int_{\mathcal{H}} p(y|x,h) p(\Dcal|h) p(h) dh.
    \label{eq:ppd}
\end{equation}

The prior probability $p(h)$ term acts as a weight which determines the significance of the hypothesis $h\in\mathcal{H}$ and the likelihood $p(\Dcal|h)$ of the data $\Dcal$ given $h$.

The prior sampling is defined as follows,
\begin{equation}
 p(D)=\E_{h \sim p(h)}[p(\Dcal|h)]   
\end{equation}

which samples hypotheses with $h \sim p(h)$ before generating synthetic datasets with $\Dcal \sim p(\Dcal|h)$.

In order to train the PFN to predict the outcomes of $\Dcal_{test} \subset \Dcal$, synthetic datasets are drawn ($\Dcal_{train} = \Dcal \setminus D_{test}$) to optimize the PFN parameters $\theta$.
The loss of the PFN training is the cross-entropy on synthetic dataset.

For a single pair $(x_{test},y_{test}) \in \Dcal_{test}$, the training loss can be written as
\begin{equation}
    \mathcal{L} = \E_{(\{(x_{test}, y_{test})\} \cup \Dcal_{train}) \sim p(\Dcal)} [-\log q_{\theta}(y_{test}|x_{test},\Dcal_{train})].
    \label{eq:lpfn}
\end{equation}

As shown by \citet{muller2021transformers}, minimizing this loss approximates the true Bayesian posterior predictive distribution. Note that the training phase is performed once offline.

We use the pre-trained PFN for clustering $D_n$ samples.  Specifically, we used PFN to predict the cluster $c \in \Ccal$ where the sample $x \in X$ belongs given a set of few pre-clustered examples $D_k$. The clustered samples $D_k$ act as a sequence of tokens of length $k$ to the Transformer. The Transformer then calculates the attention between the pre-clustered samples $D_k$. The unclustered samples $D_n \setminus D_k$ then attend to the pre-clustered samples which yields predictions of the cluster numbers for the unclustered samples in a single forward pass.

\subsection{Mathematical Model}
Consider a clustering problem with input $D_n = (x_{i})_{i = 1}^{n}$, where  $x \in \mathbb{R}^d$ (with $d$ features). Given $D_k = (x_{i}, c_{i})_{i = 1}^{k}$ clustered samples, where $k<<n$, our goal is to get the the cluster $c \in \Ccal$ for each sample $x \in D_n$. Hence, we need to predict the conditional probabilities $\Pcal(c \mid \bx)$.

We treat the pre-trained PFN $q_{\wh{\btheta}}(c \mid \bx, \cdot)$ as an untrained predictor for $\Pcal(c \mid \bx)$. 
The estimation error can be represented in terms of bias and variance components \citet{nagler2023statistical}:

\begin{align*}
	 &\quad \quad \; q_{\btheta}(c \mid \bx, D_k)  - \Pcal(c \mid \bx) \\
     &=\quad  \underbrace{q_{\btheta}(c \mid \bx, D_k) - \E_{D_k \sim \Pcal^k}[q_{\btheta}(c \mid \bx, D_k)]}_{\text{variance}}  
     &\quad +\; \underbrace{\E_{D_k \sim \Pcal^k}[q_{\btheta}(c \mid \bx, D_k)]   - \Pcal(c \mid \bx)}_{\text{bias}}.
\end{align*}

\begin{lemma}
The proposed model can learn how to cluster given a few clustered samples, $D_k$.
\end{lemma}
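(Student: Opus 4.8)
The plan is to control the total estimation error $q_{\wh{\btheta}}(c \mid \bx, D_k) - \Pcal(c \mid \bx)$ through the bias--variance decomposition already displayed, bounding each term separately and showing that both vanish as the number of pre-clustered examples $k$ grows. The starting point is the result of \citet{muller2021transformers}, reproduced via the loss in Equation~\eqref{eq:lpfn}: minimizing the cross-entropy training objective drives the network output $q_{\wh{\btheta}}(c \mid \bx, D_k)$ toward the Bayesian posterior predictive distribution $p(c \mid \bx, D_k)$ defined in Equation~\eqref{eq:ppd}. I would first use this to reduce the lemma to a statement about the PPD itself, absorbing the (small) approximation error incurred by finite offline training into an additive term that the consistency theory of \citet{nagler2023statistical} controls.

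For the bias term $\E_{D_k \sim \Pcal^k}[q_{\btheta}(c \mid \bx, D_k)] - \Pcal(c \mid \bx)$, the key step is a well-specification argument. Assuming the true cluster-generating mechanism lies in the support of the prior $p(h)$, the posterior over hypotheses concentrates on the true $h^\star$ as $k$ increases (a posterior-consistency / Doob-type argument), so that $p(c \mid \bx, D_k) \to p(c \mid \bx, h^\star) = \Pcal(c \mid \bx)$ in expectation. I would make this quantitative by invoking the rate results of \citet{nagler2023statistical}, which give the order at which the averaged PPD approaches the target conditional $\Pcal(c \mid \bx)$.

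For the variance term $q_{\btheta}(c \mid \bx, D_k) - \E_{D_k \sim \Pcal^k}[q_{\btheta}(c \mid \bx, D_k)]$, the plan is to show that the PFN prediction concentrates around its mean as $k$ grows. Here I would use the exchangeability of the context tokens $D_k$ together with a stability bound on the attention map: replacing a single element of $D_k$ perturbs the softmax-weighted output by $O(1/k)$, so a bounded-differences (McDiarmid) inequality yields concentration at rate $O(1/\sqrt{k})$, using that $q_{\btheta}$ takes values in $[0,1]$. Combining the two bounds gives $q_{\wh{\btheta}}(c \mid \bx, D_k) \to \Pcal(c \mid \bx)$ with high probability, which is the precise sense in which the model \emph{learns to cluster} from $D_k$.

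The hardest part will be the variance/stability step. Justifying the $O(1/k)$ sensitivity of the transformer output to a single context token is not automatic: attention weights are data-dependent and can in principle place large mass on one token, so one must impose either a boundedness/Lipschitz assumption on the learned query--key scores or an exchangeability-based symmetrization that sidesteps worst-case attention concentration. The second delicate point is tying the well-specification assumption for the bias to the actual prior used to pre-train the PFN, since the lemma concerns the unsupervised clustering task whereas the underlying PFN theory is phrased for supervised prediction; I would bridge this by treating the cluster label $c$ as the target $y$ and the partition structure as the hypothesis $h$, and verifying that the synthetic prior covers the relevant clustering geometries.
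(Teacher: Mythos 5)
Your variance step is essentially the paper's entire proof: the paper assumes exactly the stability condition you flag as the hard part --- namely that changing one sample of $D_k$ perturbs $q_{\btheta}(c \mid \bx, D_k)$ by at most $L k^{-\alpha}$ --- and then applies McDiarmid's bounded-differences inequality, followed by a Borel--Cantelli argument to upgrade the fixed-$k$ concentration bound into an almost-sure limit as $k \to \infty$. So where you worry (correctly) that the $O(1/k)$ sensitivity of an attention-based predictor to a single context token is not automatic, the paper simply postulates it as a hypothesis, citing \citet{nagler2023statistical}; your instinct that this needs a Lipschitz or symmetrization argument on the learned query--key scores identifies the real gap in the published argument rather than a gap in yours. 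The substantive difference is in scope: the paper stops after showing the variance term vanishes and never touches the bias term $\E_{D_k}[q_{\btheta}(c \mid \bx, D_k)] - \Pcal(c \mid \bx)$, so its lemma only establishes that the predictions stabilize, not that they stabilize at the correct conditional. Your proposal to close the bias via posterior consistency under a well-specification assumption (the true clustering mechanism lies in the support of the pretraining prior) is a genuine strengthening, and it is also where your proof would require assumptions the paper nowhere states --- in particular that the synthetic supervised prior used to train the PFN actually covers the clustering geometries encountered at test time. In short: your route subsumes the paper's, the shared portion is identical in mechanism, and the extra bias analysis buys a stronger conclusion at the price of an unverified well-specification hypothesis.
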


\begin{proof}

When we reveal more clustered samples (i.e. $k$ increases) from $\Dcal_n$, the influence of individual set of samples decreases which bound the variance of the PFN model $q_{\btheta}$. Similar to~\citet{nagler2023statistical}, suppose there are $\alpha > 0$ and $L < \infty$, such that for large enough $k$ and almost all data sets $\Dcal_k, \Dcal_k'$ differing only in one sample,
\begin{align} 
         \bigl| q_{\btheta}\bigl(c \mid \bx, \Dcal_k\bigr) - q_{\btheta}(c \mid \bx, \Dcal_k') \bigr|  
         \le L k^{-\alpha}. \label{eq:bdiff}
\end{align} 
 
Using \citep{mcdiarmid1989method},
\begin{align*}
\Pr\bigl(	\bigl| q_{\btheta}(c \mid \bx, \Dcal_k) - \E[q_{\btheta}(c \mid \bx, \Dcal_k)] \bigr| > \epsilon\bigr) \le 2\exp\biggl(-\frac{2 \epsilon^2}{L^2 k^{1 - 2\alpha}}\biggr).
\end{align*}

This implies, 
\begin{align*}
	\sum_{k = 1}^\infty \Pr\bigl(	\bigl| q_{\btheta}(c \mid \bx, \Dcal_k) - \E[q_{\btheta}(c \mid \bx, \Dcal_k)] \bigr| > \epsilon\bigr) \le 2 	\sum_{k = 1}^\infty \exp\biggl(-\frac{2 \epsilon^2}{L^2 k^{1 - 2\alpha}}\biggr) < \infty.
\end{align*}

The Borel-Cantelli lemma then implies that, almost surely,
\begin{align*}
	\lim_{k \to \infty} \bigl| q_{\btheta}(c \mid \bx, \Dcal_k) - \E[q_{\btheta}(c \mid \bx, \Dcal_k)] \bigr| \le  \epsilon
\end{align*}
Since $\epsilon$ is arbitrary,
\begin{align*}
	  \lim_{k \to \infty} q_{\btheta}(c \mid \bx, \Dcal_k) - \E[q_{\btheta}(c \mid \bx, \Dcal_k)]  = 0 \quad \text{with high probability}.
	\end{align*}
This means that as more we reveal clustered samples, as more the variance decreases which explains how the proposed model with fixed parameters $\theta$ can learn how to cluster given a few clustered samples, $D_k$, independently from the model parameters $\theta$.
 \end{proof} 

\section{Experiments}
\label{sec:results}
We start by showing the performance of the proposed algorithm for small challenging datasets. Then, we move to a large real dataset.

\subsection{Small challenging dataset}
We compare the proposed algorithm to some popular clustering algorithms on different small challenging datasets ($n=2000$). The parameters of each of these algorithms have been tuned to produce good clustering results for all the classical clustering algorithms~\citet{scikit-learn}. Figure~\ref{fig:challenge_cluster} shows the performance of the different techniques in terms of the running time $T$ and the $V$-measure~\citet{rosenberg2007v}. The figure shows that the proposed technique can cluster easily separable data, e.g. the last row, without seeing any pre-clustered samples ($k=0$). For a medium complexity for sample distribution, e.g. the third row, the algorithm achieves comparable accuracy also without seeing any clustered samples. For challenging complex datasets, the algorithm needs to see some pre-clustered samples. We show the results of the proposed clustering algorithm for $k=100$. The results show that the clustering algorithm can achieve a superior accuracy than the classical techniques. The running time~\footnote{We run the experiments on GPU. We investigate the use of CPU in the next subsection. } for the proposed algorithm is comparable to other algorithms.    
\begin{figure*}
      \centering
   \includegraphics[scale=0.27]{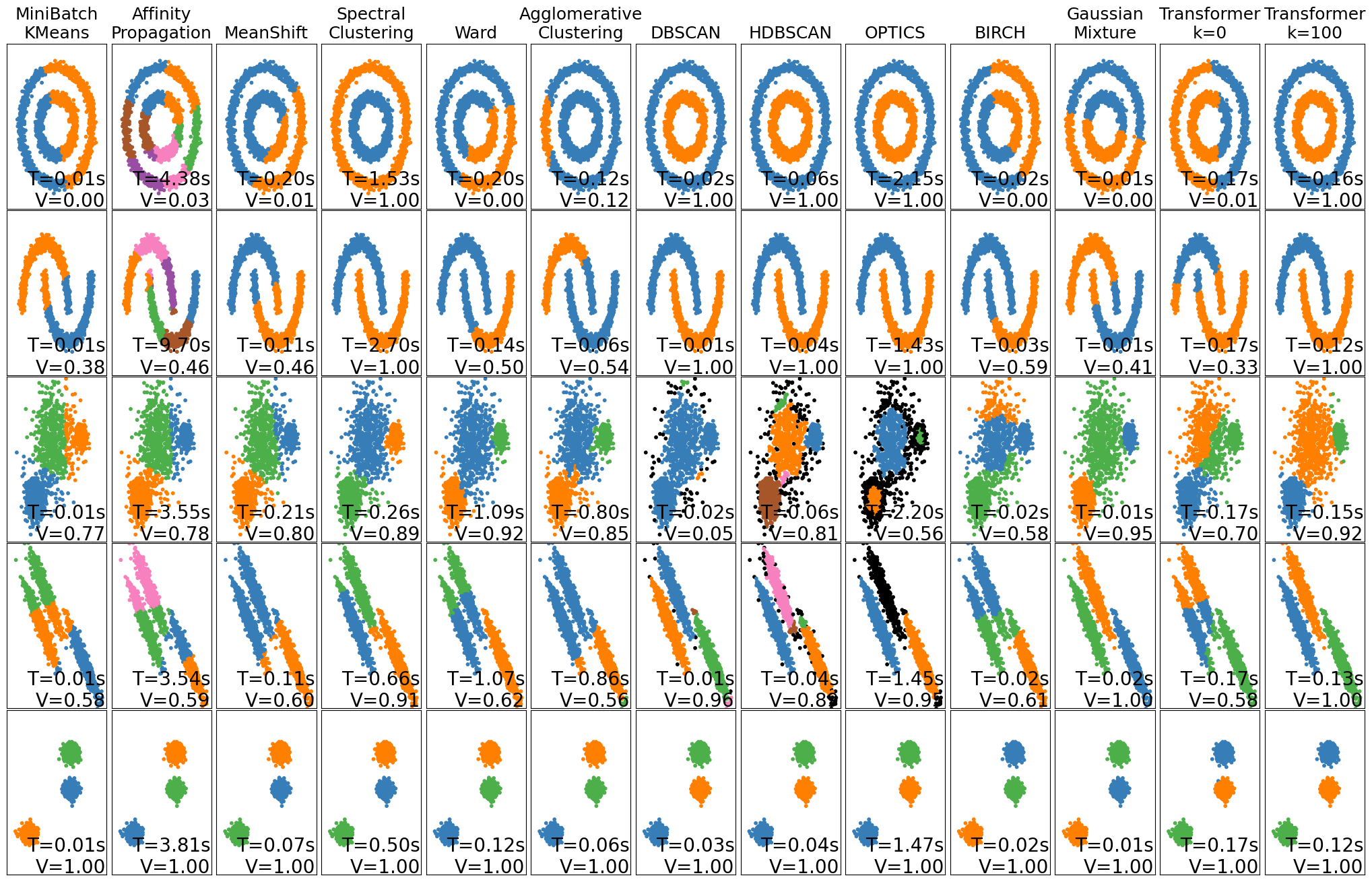}
  \caption{Clustering results for small challenging datasets.} \label{fig:challenge_cluster}
\end{figure*}

\subsection{Large Real Dataset}
We use MNIST dataset to evaluate the performance of the different clustering algorithms~\citet{deng2012mnist}. We start by showing the accuracy of the proposed clustering technique with the increase of the number of pre-clustered samples ($k$). Then, we present a comparison between the proposed clustering algorithm and the classical clustering techniques in terms of accuracy and running time. 

\paragraph{Pre-clustered Samples ($k$):} Figure~\ref{fig:k_effect}, shows the effect of increasing the number of pre-clustered samples seen by the algorithm on the algorithm's performance. As expected, increasing the number of clustered samples increases algorithm performance as it enables the model to know how efficiently can separate the samples.  

\begin{figure*}[!t]
      \centering
   \includegraphics[width=10cm,height=7cm]{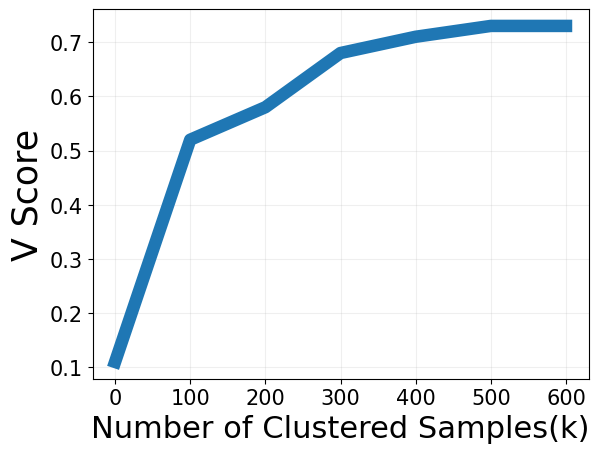}
  \caption{Effect of increasing the number of pre-clustered samples on the algorithm performance.}\label{fig:k_effect}
\end{figure*}

\paragraph{Algorithm Performance:} 
Figure~\ref{fig:performance}, compares the proposed algorithm performance against the classical techniques. The figure shows that our algorithm achieves better accuracy than the classical clustering techniques. This accuracy can be further enhanced by using more pre-clustered samples if available.  

\begin{figure*}[!t]
      \centering
   \includegraphics[scale=0.3]{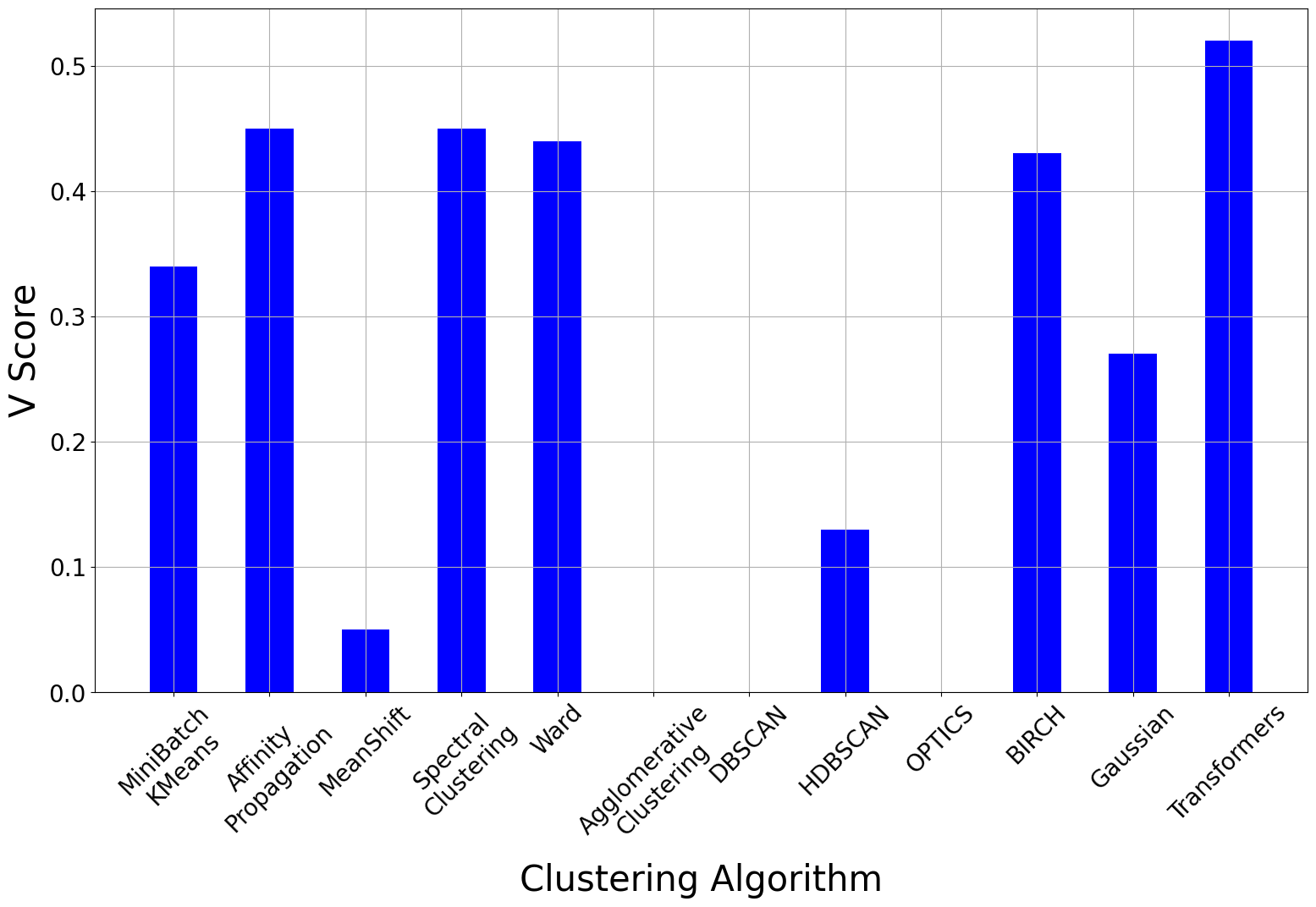}
  \caption{Performance of the different clustering algorithms.}\label{fig:performance}
\end{figure*}

\paragraph{Running Time:} 
Figure~\ref{fig:time}, shows the running time comparison for the different clustering algorithms. The figure shows that the implementation of the algorithm on GPU is among the fastest running times. Moreover, the CPU implementation is still comparable with the other algorithms. 

\begin{figure*}[!t]
      \centering
   \includegraphics[scale=0.3]{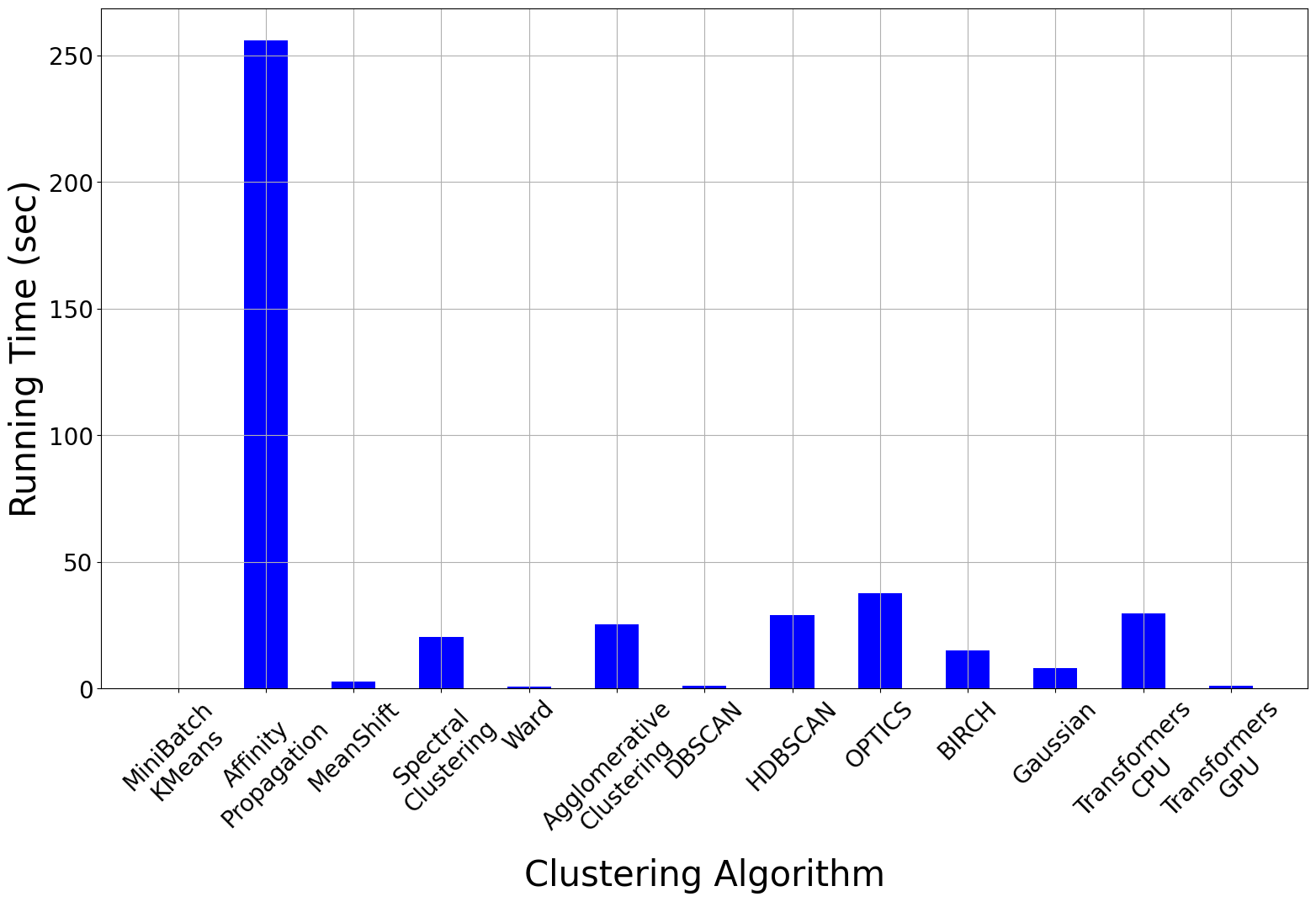}
  \caption{Running time of the different clustering algorithms.}\label{fig:time}
\end{figure*}

\section{Discussion}
\label{sec:discussion}
The experimental results presented in the previous section demonstrate that the proposed clustering algorithm is capable of accurately and efficiently separating data samples into meaningful groups. For datasets with simple or well-separated distributions, the algorithm can successfully perform clustering without the need for any pre-clustered examples. This highlights the model's ability to leverage the structure of the data directly through its learned prior and self-attention mechanism. In contrast, for datasets with more complex or overlapping distributions, the algorithm benefits from being provided with a few pre-clustered samples. These examples act as informative anchors that guide the attention mechanism within the Prior Fitted Transformer, allowing the model to propagate cluster membership information and improve clustering performance. The number of pre-clustered examples required varies with the complexity of the dataset: more intricate distributions typically require a larger number of labeled examples to achieve optimal separation. However, even with a small number of examples, the algorithm demonstrates strong generalization capabilities and steadily improves as more labeled examples are introduced.

In terms of performance, the proposed method consistently outperforms widely-used clustering algorithms such as K-means and hierarchical clustering, particularly in settings where minimal supervision is available. Importantly, it achieves this superior accuracy while maintaining a comparable computational runtime. Due to its fast inference and strong generalization ability, the algorithm can be effectively used as a foundational module in a variety of machine learning pipelines, including tasks such as data augmentation, semi-supervised learning, and dimensionality reduction.

Despite its advantages, the proposed algorithm also inherits the limitations of the Transformers architecture. Specifically, the attention mechanism introduces a space and time complexity of 
$O(n^2)$. This quadratic scaling can become a bottleneck for very large datasets. To address this, several approaches have been proposed in the literature to reduce the computational overhead of attention mechanisms. For example, FlashAttention~\citet{dao2022flashattention} offers a memory-efficient implementation of scaled dot-product attention; Longformer~\citet{beltagy2020longformer} introduces sparse attention patterns to reduce complexity; and BigBird~\citet{zaheer2020big} combines random, global, and local attention to achieve linear complexity. Integrating these scalable attention mechanisms into the PFN framework represents a promising direction for future work to further enhance the scalability of the proposed clustering approach.

\section{Conclusion}
\label{sec:conclusion}
In this work, we introduced a robust and accurate clustering algorithm that eliminates the need for parameter tuning while achieving performance that exceeds state-of-the-art clustering methods. Our approach leverages a pre-trained Prior Fitted Transformer Network (PFN), which is capable of learning from a small number of pre-clustered samples and inferring the cluster assignments for the entire dataset in a single forward pass. We provided theoretical justification showing that the algorithm can generalize to cluster any arbitrary dataset. Experimental evaluations on a variety of challenging benchmark datasets demonstrated that the proposed method effectively separates samples with high accuracy, while maintaining a runtime comparable to traditional clustering techniques.

\medskip
\bibliography{main}
\bibliographystyle{unsrtnat}


\end{document}